\title{Score-Based Methods for Discrete Optimization in Deep Learning}
\name{Eric Lei$^1$ \qquad Arman Adibi$^2$ \qquad Hamed Hassani$^1$  }
\address{$^1$Department of Electrical and Systems Engineering, University of Pennsylvania \\
$^2$Department of Electrical Engineering, Princeton  University}
\begin{document}
\maketitle
\begin{abstract}
Discrete optimization problems often arise in deep learning tasks, despite the fact that neural networks typically operate on continuous data. One class of these problems involve objective functions which depend on neural networks, but optimization variables which are discrete. Although the discrete optimization literature provides efficient algorithms, they are still impractical in these settings due to the high cost of an objective function evaluation, which involves a neural network forward-pass. In particular, they require $O(n)$ complexity per iteration, but real data such as point clouds have values of $n$ in thousands or more. In this paper, we investigate a score-based approximation framework to solve such problems. This framework uses a score function as a proxy for the marginal gain of the objective, leveraging embeddings of the discrete variables and speed of auto-differentiation frameworks to compute backward-passes in parallel. We experimentally demonstrate, in adversarial set classification tasks, that our method achieves a superior trade-off in terms of speed and solution quality compared to heuristic methods. 
\end{abstract}
\begin{keywords}
discrete optimization, subset selection, deep learning on sets
\end{keywords}
\section{Introduction}
\label{sec:intro}

Although deep learning (DL) primarily operates on continuous data, \textit{discrete} optimization problems often arise in many DL applications, especially in data with discrete aspects (e.g. sets, graphs, and text). In these problems, the data that is operated on contains structure in the form of a set or a sequence $S$. Some examples may include graph nodes or point cloud points for sets, and tokens in a text sequence for sequences. For these examples, relevant tasks may include saliency computation of different elements in $S$, or to find adversarial sets/sequences $S$ corresponding to some predictor of $S$. 
    
Many of these problems can be formulated as a subset selection problem, which has the form 
\begin{equation}
    \max_{S' \subseteq S} \varphi(S'),
    \label{eq:general}
\end{equation}
where $\varphi$ is an objective function of the set or sequence. In DL applications, $\varphi$ typically depends on a neural network. For example, in finding adversarial subsets or subgraphs in set and graph classification, $\varphi$ can be viewed as the error function of a classifier $f_{\theta}(S)$, i.e., $\varphi(S) = \ell(y, f_\theta(S))$.

\begin{figure}[t]
     \centering
     \begin{subfigure}[b]{0.49\linewidth}
         \centering
         \includegraphics[width=0.7\linewidth]{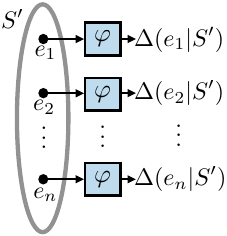}
         \caption{Exact marginal gain.}
         \label{fig:marginal_gain_computation}
     \end{subfigure}
     \hfill
     \begin{subfigure}[b]{0.49\linewidth}
         \centering
         \includegraphics[width=0.75\linewidth]{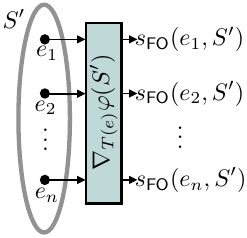}
         \caption{Score-based approximation.}
         \label{fig:score_computation}
     \end{subfigure}
        \caption{Computing exact marginal gain $\Delta(e|S') \coloneqq \varphi(S' \setminus e) - \varphi(S')$ requires $O(n)$ forward-passes. The score-based surrogate requires a single forward and backwards pass.}
        \label{fig:intro_fig}
\end{figure}

This aspect of the objective function makes solving \eqref{eq:general} inefficient, since evaluating $\varphi$ may require large computational resources.  Exactly solving \eqref{eq:general} is NP-hard in general. Many approximation algorithms exist that sacrifice exact solutions in exchange for speed; however, these are still too slow in practice.  While approximation methods such as greedy iterations yield approximation guarantees for \eqref{eq:general}, they require $O(n)$ function evaluations of $\varphi$ per iteration, where $n = |S|$. Since evaluating $\varphi$ requires a forward-pass through a neural network, these approximation methods hence require $O(n)$ neural network evaluations per iteration. This is clearly intractable in practical settings, where the number of elements in a point cloud or nodes in a graph are routinely at least on the order of $10^{4}$ \cite{chang2015shapenet, wu20153d, hu2020open}.

Rather than treat $\varphi$ as a black-box function and apply standard approximation methods, we propose to leverage the structure of $\varphi$ to design more efficient algorithms. Since $\varphi$ highly depends on DL architectures, we can use the embedding space of the functions parameterized by the DL models to inform the optimization solver. We propose a principled method that approximates the marginal gain of $\varphi$ using a first-order approximation in the embedding space, where removing an element can be simulated using \textit{uninformative embeddings}. This method generalizes several ad-hoc methods, and leverages the auto-differentiation and parallelization of $\varphi$ to compute gradients. 

In what follows, we discuss past works and related approaches in Sec.~\ref{sec:related_work} before introducing our score-based discrete optimization framework in Sec.~\ref{sec:framework}. Then, we apply our method to real-world settings, such as adversarial subset selection in set prediction, and saliency score computation, in Sec.~\ref{sec:results}.

\section{Related Work}
\label{sec:related_work}
\begin{algorithm}[t]
    \caption{Iterative Subset Selection}\label{alg:iterative}
        \begin{algorithmic}
            \Require Score function $s : S \times 2^S \rightarrow \mathbb{R}$, steps $k$
            \State Initialize $S' = S$
            \For{$i=1,\dots,k$}
                \State $e^* = \argmax_{e \in S'} s(e, S')$ 
                \State $S' \leftarrow S' \setminus e^*$ 
            \EndFor
            \State Return $S'$
        \end{algorithmic}
\end{algorithm}

\textbf{Deep Learning On Sets}: Deep learning architectures that operate on sets were first prominently proposed by the authors of DeepSets \cite{zaheer2017deep} and PointNet \cite{qi2017pointnet}, as well as follow-up works \cite{qi2017pointnet++}. These architectures consume a set $S$, and can be easily adapted to (i) set classification or (ii) element classification (e.g., segmentation) settings. In both cases, they first a shared feature space for each element of the set using a permutation-equivariant architecture such as a point-based MLP \cite{zaheer2017deep, qi2017pointnet}, or graph neural network \cite{qi2017pointnet++}. For (i), a global pooling layer is then used, which ensures that a shared feature space is learned across sets of varying sizes. This also enforces permutation-invariance within the model, which is ideal since sets do not depend on the ordering of the elements. For (ii), no global pooling is used, and instead the elements are classified directly from the element feature space. 

\textbf{Adversarial Subset Selection}:
Selecting adversarial subsets of a set classifier $f_{\theta}$ is a natural application of \eqref{eq:general} such that the objective $\varphi$ depends on a neural network $f_{\theta}$. Though not initially motivated by this application, the authors of \cite{zheng2019pointcloud} first proposed methods to find adversarial subsets of point cloud classifiers. This method defines a heuristic saliency score for each point $i$, where the saliency score simulates the effect of deleting $i$ from the point cloud. The motivation is behind this is that the global pooling of the classifiers will ``focus'' on a particular subset of points in order to discern shape. Since the points of an object point cloud lie on a 2-manifold, shifting point $i$ to an uninformative region (such as the origin) will cause the global pooling to ignore point $i$, and hence simulate a deletion. Other works that discuss adversarial point cloud attacks specifically \cite{zhou2019dup, yang2019adversarial} use variants of this as a deletion attack to design defenses. In this work, we demonstrate that these methods are actually performing an approximation of the greedy algorithm in set function maximization. Then, we show how the saliency score can be generalized to more general score functions that approximate the marginal gain.

\textbf{Other Applications Involving Subset Selection}:
Other applications that fit the framework in \eqref{eq:general} can include coreset selection \cite{coleman2019selection, rangwani2021s3vaada, dolatabadi2023adversarial} where one must select a subset of a dataset for some task. These works also use some objective function $\varphi$ which typically depends on a neural network for the downstream task of interest, and apply a variant of iterative greedy algorithms. 

\section{Score-Based Greedy Approximation}
\label{sec:framework}

We first discuss greedy methods for solving the subset maximization problem in \eqref{eq:general}, before introducing the score-based framework. To solve the problem in \eqref{eq:general}, the popular approach involves an iterative procedure which sequentially removes elements from the set. This picks $k$ elements to remove in a sequential manner, such that at each step, the element picked is the one that maximizes a score function $s : S \times 2^S \rightarrow \mathbb{R}$. 
Hence, the update at each step performs 
\begin{equation}
    S' \leftarrow S' \setminus \{\argmax_{e \in S'} s(e, S')\},
\end{equation}
where $S'$ is initialized to be $S$. 

\subsection{Greedy Methods}
Traditionally, the function used for the score $s$ is the marginal gain of removing element $e$ from $S'$, defined to be
\begin{equation}
    \Delta(e|S') \coloneqq \varphi(S' \setminus e) - \varphi(S').
\end{equation}
Using the marginal gain, i.e. $s(e, S') = \Delta(e|S')$, in Alg.~\ref{alg:iterative}, is known as the greedy algorithm. This algorithm provides a $1-1/e$ approximation of the optimal solution of \eqref{eq:general} when $\varphi$ is a monotone submodular set function and there is a cardinality constraint of $|S'| \geq |S|-k$ in \eqref{eq:general}.
Even when $\varphi$ is not submodular, the greedy algorithm can provide good performance \cite{bian2017guarantees}.

However, as mentioned in Sec.~\ref{sec:intro}, finding $\argmax_e \Delta(e|S')$ requires $O(n)$ function evaluations, where $n = |S'|$. In many non-deep learning applications, this is acceptable; however, when $\varphi$ depends on a neural network forward pass to evaluate, this can be impractical for point clouds or sets containing thousands to millions of points. It is also not easy to parallelize this operation, since one would need to create a superset of $n$ total sets $S' \setminus e, e \in S'$, to input to $\varphi$. This would require $O(n^2)$ elements to be stored in memory, which is infeasible for most commodity GPUs. Additionally, various algorithms applied in discrete optimization, such as replacement greedy, gradient descent on the multilinear extension of discrete functions, and conditional gradient on the multilinear extension of discrete functions, require $O(n)$ function evaluations \cite{adibi2022minimax, hassani2017gradient, zhang2020one}.

\subsection{Score-based Approximation}
Rather than use the marginal gain function, we argue that replacing $\Delta(e|S')$ with a general score function $s$ that acts as a surrogate for $\Delta$ can still perform well, yet with much better computational speed. Our goal is to find some function $s$ such that 
\begin{equation}
    s(e, S') \approx \Delta(e|S') = \varphi(S' \setminus e) - \varphi(S').
\end{equation}
In general, it can be unclear how to find such a function. However, in most problems of interest, the $\varphi$ function (which depends on a neural network), typically depends on the set elements $e \in S'$ via associated features or embeddings, which we will denote $T(e)$. While $e$ is typically just an index denoting the element in the set, $T(e)$ carries the information contained in the set depending on the application of interest. For example, in point clouds, each element $e \in S'$ is represented by a 3-dimensional coordinate which contains the geometry information of the object. Similarly, graphs nodes typically carry node features, and text data is usually tokenized and converted in a sequence of embedding vectors. The features $T(e)$ are Euclidean vectors which we can compute gradients with respect to. 

Since $\varphi$ processes the set $S' = \{e_1,\dots,e_n\}$ via its embedding set $\mathcal{T}_{S'} = \{T(e_1), \dots, T(e_n)\}$, we define the auxiliary function $\bar{\varphi}$ that directly depends on the embddings,
\begin{equation}
    \bar{\varphi}(\mathcal{T}_{S'}) \coloneqq \varphi(S).
\end{equation}
Thus, $\bar{\varphi}$ and $\varphi$ can be used interchangeably depending on the arguments one wishes to use. 
One can use a first-order Taylor expansion of $\bar{\varphi}$, 
\begin{equation}
    \bar{\varphi}(\mathcal{T}_{\tilde{S}}) \approx \bar{\varphi}(\mathcal{T}_{S'}) + \nabla_{\mathcal{T}_{S'}} \bar{\varphi}(\mathcal{T}_{S'})^\top (\mathcal{T}_{\tilde{S}} - \mathcal{T}_{S'}),
\end{equation}
where $\mathcal{T}_{\tilde{S}} = \{T_1, \dots, T_n\}$ is some set of $n$ embeddings corresponding to a set $\tilde{S}$. Here, we overload notation for the term $\nabla_{\mathcal{T}_{S'}} \bar{\varphi}(\mathcal{T}_{S'})^\top (\mathcal{T}_{\tilde{S}} - \mathcal{T}_{S'})$ by treating $\mathcal{T}_{\tilde{S}}$ and $\mathcal{T}_{S'}$ as ordered vectors, even though they are sets. This is valid since both sets are of the same size, and the inner product is a summation operation. 
Rearranging, we have that 
\begin{equation}
    \bar{\varphi}(\mathcal{T}_{\tilde{S}}) - \bar{\varphi}(\mathcal{T}_{S'}) \approx -\nabla_{\mathcal{T}_{S'}} \bar{\varphi}(\mathcal{T}_{S'})^\top (\mathcal{T}_{S'} - \mathcal{T}_{\tilde{S}}).
    \label{eq:score1}
\end{equation}
Note that the left hand side is equivalent to $\varphi(\tilde{S}) - \varphi(S')$. For this to fit the marginal gain, the set $\tilde{S}$ would need to approximate the set $S' \setminus e$. However, computing the inner product in \eqref{eq:score1} requires that $\tilde{S}$ and $S'$ have the same number of elements, which is clearly not possible if $\tilde{S} = S' \setminus e$. One way to avoid this issue is to choose $\tilde{S}$ (and its embedding set $\mathcal{T}_{\tilde{S}}$) carefully in order to simulate a deletion of element $e$ from $S'$. 

\subsection{Uninformative Embeddings}
As explained in the previous section, in the settings we are interested in, the function $\varphi$ depends directly on the embedding set $\mathcal{T}_{S'}$. In order to simulate a deletion of some element $e$, one can choose to perturb the embedding $T(e)$ in $\mathcal{T}_{S'}$ so that it gets ``ignored'' by the computation of $\varphi$. If this is the case, such a perturbation of $T(e)$, call it $T'(e)$, effectively simulates inputting $\mathcal{T}_{S'} \setminus T(e)$ to $\bar{\varphi}$. Since $T'(e)$ gets ignored from $\varphi$'s computation, we refer to $T'(e)$ as an \textit{uninformative embedding}.

The question now is how to obtain an uninformative embedding $T'(e)$. To this end, the following proposition describes a class of functions $\varphi$ for which an uninformative embedding always exists for each element of the set.
\begin{proposition}
    Suppose that $\bar{\varphi}$ has the form
    \begin{equation}
        \bar{\varphi} (\mathcal{T}_{S'}) = g_2 \left (\max_{e \in S'} g_1(T(e)) \right),
        \label{eq:form}
    \end{equation}
    where $g_1: \mathbb{R}^d \rightarrow \mathbb{R}^k$ and $g_2 : \mathbb{R} \rightarrow \mathbb{R}$ is some scalar function.
    If $T'(e)$ satisfies $g_1(T'(e)) < g_1(T(\hat{e})), \forall \hat{e} \in S'$,
    \begin{equation}
        \bar{\varphi}(\{T(e_1),\dots,T'(e),\dots,T(e_n)\}) = \bar{\varphi}(\mathcal{T}_{S'} \setminus T(e)),
    \end{equation}
    where the LHS input is $\mathcal{T}_{S'}$ with $T(e)$ replaced with $T'(e)$.
\end{proposition}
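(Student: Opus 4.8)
The plan is to exploit the fact that $\bar{\varphi}$ depends on its input embeddings only through the coordinate-wise maximum $\max_{e \in S'} g_1(T(e)) \in \mathbb{R}^k$, composed with the scalar map $g_2$. The key observation is that the ``$\max$'' here must be interpreted coordinate-wise (since $g_1$ has codomain $\mathbb{R}^k$ but $g_2$ takes a scalar — actually I will first need to reconcile this apparent type mismatch, reading the $\max$ as producing a vector in $\mathbb{R}^k$ on which $g_2$ acts componentwise, or reading $g_1$ as scalar-valued; either way the argument is the same), and that adding or removing an element whose $g_1$-value is strictly dominated by every other element's $g_1$-value does not change that coordinate-wise maximum. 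So the whole proof reduces to a statement about the $\max$ operation over finite sets.

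First I would set $v_j \coloneqq g_1(T(e_j))$ for $j = 1, \dots, n$ and $v' \coloneqq g_1(T'(e))$, where $e = e_i$ say. By hypothesis $v' < v_j$ (coordinate-wise) for every $j$, in particular $v' < v_i$. Then I would compute both sides. The left-hand side is $g_2\bigl(\max\{v_1, \dots, v'_{(i)}, \dots, v_n\}\bigr)$ where the $i$-th term has been swapped to $v'$; since $v' < v_i \le \max_j v_j$ coordinate-wise, replacing $v_i$ by $v'$ cannot decrease the max below what the remaining terms already achieve — more precisely, for each coordinate $\ell$, $\max(\{v_j^{(\ell)} : j \ne i\} \cup \{v'^{(\ell)}\}) = \max\{v_j^{(\ell)} : j \ne i\}$ because $v'^{(\ell)} < v_j^{(\ell)}$ for any particular $j \ne i$ (using $n \ge 2$, i.e.\ $S'$ has at least one element besides $e$). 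Hence the inner max equals $\max_{j \ne i} v_j$, which is exactly the inner max defining $\bar{\varphi}(\mathcal{T}_{S'} \setminus T(e))$, so applying $g_2$ gives equality of the two sides.

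A small edge case to dispatch is $n = 1$, i.e.\ $S' = \{e\}$: then $\mathcal{T}_{S'} \setminus T(e) = \emptyset$ and $\max$ over the empty set is undefined, so the statement should be understood for $|S'| \ge 2$, or one adopts a convention; I would note this briefly. I would also remark that the hypothesis can be weakened — strict domination in \emph{every} coordinate is sufficient but one really only needs $v'^{(\ell)} < \max_{j \ne i} v_j^{(\ell)}$ for each $\ell$ — but the stated form is what is used in the sequel, so I would keep it.

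The main obstacle is not any hard calculation — the argument is elementary — but rather pinning down the correct reading of \eqref{eq:form} so the dimensions are consistent (the $g_1 : \mathbb{R}^d \to \mathbb{R}^k$ versus $g_2 : \mathbb{R} \to \mathbb{R}$ discrepancy), and being careful that the claimed invariance genuinely requires the presence of at least one other element of $S'$ to ``absorb'' the maximum that $T(e)$ used to contribute; without that, deleting $e$ could strictly change the max. Once the coordinate-wise max interpretation is fixed and the $|S'| \ge 2$ caveat is in place, the proof is a two-line computation.
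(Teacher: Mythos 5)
Your argument is correct and is essentially the paper's own proof: the hypothesis forces $g_1(T'(e))$ to be strictly dominated coordinate-wise, so the feature-wise maximum ignores it and coincides with the maximum over $\mathcal{T}_{S'}\setminus T(e)$, after which applying $g_2$ gives the claim. Your additional remarks (the coordinate-wise reading of the max, the implicit $|S'|\ge 2$ requirement, and the possible weakening of the hypothesis) are sensible clarifications but do not change the route.
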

\begin{proof}
    By the assumption, the vector $g_1(T'(e))$ will not be selected by the feature-wise maximum in \eqref{eq:form}. Hence with $\{T(e_1),\dots,T'(e),\dots,T(e_n)\}$ as input, $\bar{\varphi}$ will ignore $T'(e)$ and effectively compute $\bar{\varphi}(\mathcal{T}_{S'} \setminus T(e))$.
\end{proof}
Many permutation-invariant set deep learning models, such as PointNet and DeepSets with max-pooling layers, induce functions $\varphi$ that satisfy Prop.~1. 

Hence, in order for \eqref{eq:score1} to provide a first-order approximation of the marginal gain of removing element $e$ (i.e., $\Delta(e|S')$), it suffices to choose 
\begin{equation}
    \mathcal{T}_{\tilde{S}} = \mathcal{T}_{\tilde{S}(e)} \coloneqq \{T(e_1),\dots,T'(e),\dots,T(e_n)\}.
\end{equation}
Hence, our proposed score $s_{\mathsf{FO}}$ to first-order approximate $\Delta(e|S')$ is defined as
\begin{align}
    s_{\mathsf{FO}}(e, S') &\coloneqq -\nabla_{\mathcal{T}_{S'}} \bar{\varphi}(\mathcal{T}_{S'})^\top (\mathcal{T}_{S'} - \mathcal{T}_{\tilde{S}})  \nonumber \\
    &= -\nabla_{T(e)}\bar{\varphi}(\mathcal{T}_{S'})^\top (T(e) - T'(e)). \label{eq:score}
\end{align}

\begin{remark}
    $s_{\mathsf{FO}}(e, S')$ admits computationally-friendly implementations compared to $\Delta(e|S')$. To compute $\max_{e \in S'} s_{\mathsf{FO}}(e, S')$, one only needs to compute a single forward pass through $\varphi$ with $S'$ as input, and one backwards pass to get all of the gradients $-\nabla_{T(e)}\bar{\varphi}(\mathcal{T}_{S'})$ in \emph{parallel}. Since $S'$ is the only input set, this approach only uses $O(n)$ memory. 
\end{remark}

\begin{remark}
    The saliency proposed in \cite{zheng2019pointcloud}, can be written as
    \begin{equation}
        \bs_i = - \|\bx_i - \bar{X}\|_2^2 \nabla_{\bx_i} \ell(y, f_{\theta}(X))(\bx_i - \bar{X}),
    \end{equation}
    which is the saliency score for point $\bx_i$ in a point cloud $X = \{\bx_1,\dots,\bx_n\}$, where $\bar{X}$ is the median of $X$. In their work, $\bs_i$ was derived heuristically using the fact that point clouds have points $\bx$ that lie on a surface, and the classifier $f_{\theta}$ will tend to classify based on points on the boundary, and not near the center (i.e., median) of the point cloud. The median, in this case, corresponds to the uninformative embedding that we have proposed. Hence, our approach using $s_{\mathsf{FO}}$ generalizes the saliency score, by showing that it approximates a first-order approximation of the marginal gain. 
\end{remark}

To find subsets, we may apply Alg.~\ref{alg:iterative}, but replace $\Delta(e|S')$ with $s_{\mathsf{FO}}(e, S')$ as the score function. 

\section{Experimental Results}
\label{sec:results}

\subsection{Experimental Setup}
In order to evaluate $s_{\mathsf{FO}}$, we examine adversarial subset selection for set classification. In particular, we look at point cloud classifiers $f_{\theta}$, which predicts classes $\hat{y} = f_{\theta}(S)$. In all experiments, we use PointNet \cite{qi2017pointnet} as the classifier $f_{\theta}$. We use the ModelNet40 \cite{wu20153d} dataset. Since the shapes in ModelNet40 are provided as triangle meshes, we sample 1024 points uniformly from the surface defined by the mesh. Hence, the sets $S$ each represent a point cloud of $n=1024$ points, with embeddings $\mathcal{T}_S$ as the corresponding set of 3D coordinates. 

To find adversarial subsets, we solve
\begin{equation}
    S^*_k = \argmax_{S' \in 2^S: |S'| \geq n-k} \ell(y, f_{\theta}(S')) 
\end{equation}
where $\ell$ is the cross-entropy loss. We compare greedy and score-based methods, as well as random (remove a random sample in each iteration), and hybrid methods (the score is used to create a $m$-sized candidate set, of which the max marginal gain is chosen from). For the score-based methods, we use the median as the uninformative embeddings as in \cite{zheng2019pointcloud}, as well as a feature-space embedding in PointNet:
$s(e, S') = -  \nabla_{\phi(\bx_i)}\ell(y,f_{\theta}(S'))^\top (\phi(\bx_i) - \phi_{\text{min}})$, where $\phi(\cdot)$ is the pointwise features of PointNet (i.e., the output of the pointwise MLP), and $\phi_{\text{min}}$ is the pointwise minimum.

\subsection{Results}
\begin{figure}[t]
    \centering
    \includegraphics[width=0.75\linewidth]{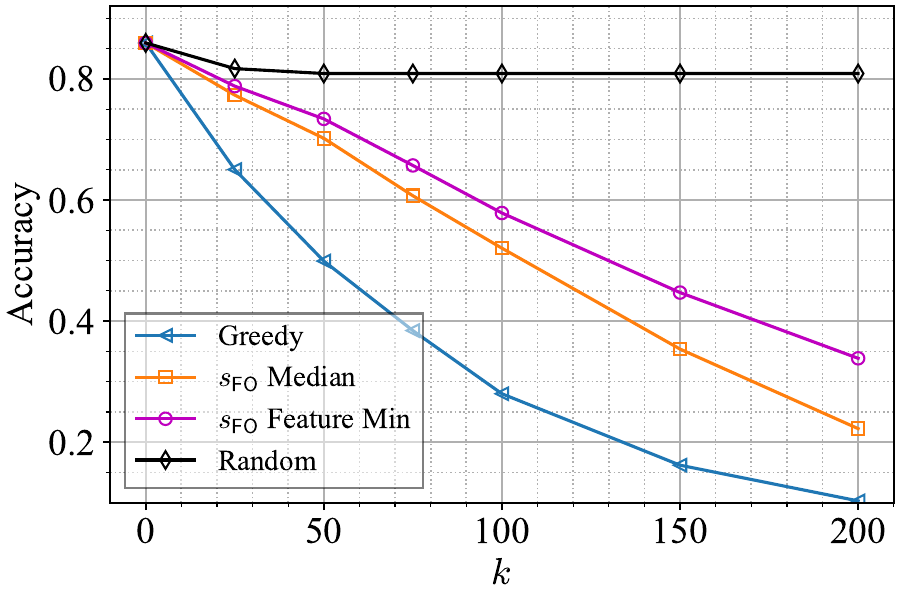}
    \vspace{-1em}
    \caption{\small Adversarial subset selection in point cloud classification.}
    \label{fig:greedy_comparison}
\end{figure}
\begin{figure}[t]
    \centering
    \includegraphics[width=0.75\linewidth]{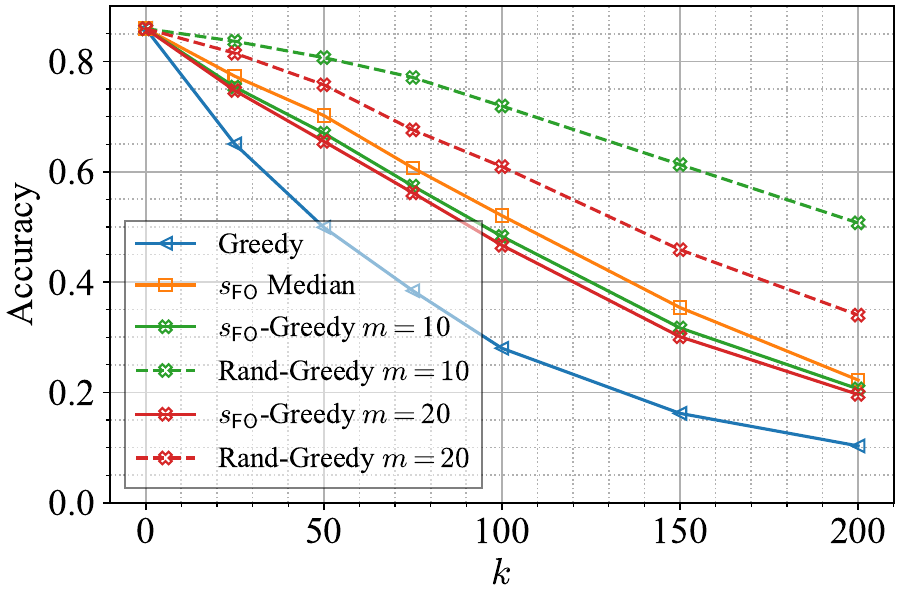}
    \vspace{-1em}
    \caption{\small Score-guided greedy methods. }
    \label{fig:hybrid_comparison}
\end{figure}

\begin{figure}[t]
    \centering
    \includegraphics[width=0.75\linewidth]{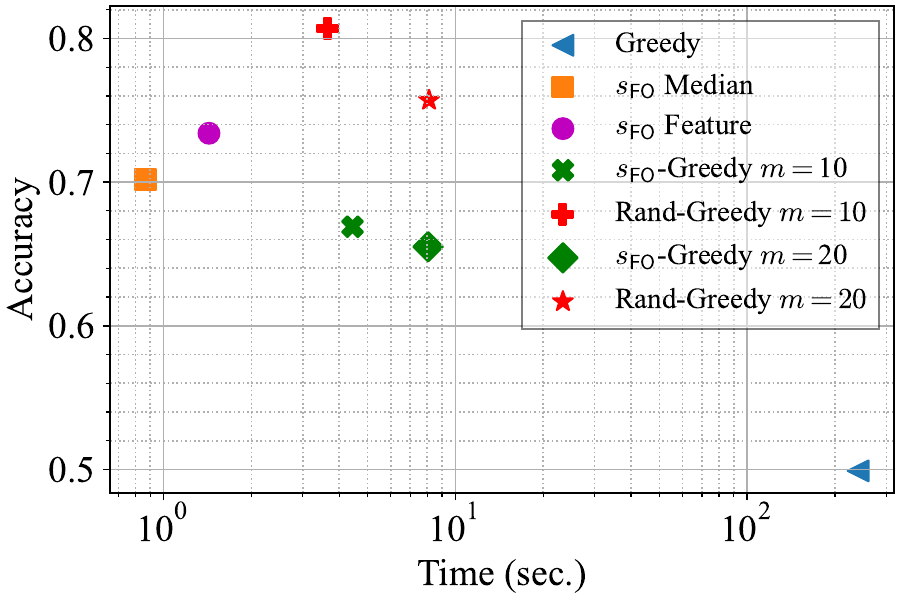}
    \vspace{-1em}
    \caption{\small Performance-speed tradeoffs. Lower is better for both axes. }
    \label{fig:complexity}
\end{figure}

We first show the drop in classification accuracy (via the adversarial subset) as a function of $k$ using the different methods in Fig.~\ref{fig:greedy_comparison}. We see clearly that the greedy method performs best in finding subsets that reduce the accuracy. Randomly deleting points fails to decrease the accuracy. The $s_{\mathsf{FO}}$ methods are able to find adversarial subsets, with the median method performing around 5\% better. This is likely due to the fact that PointNet has a spatial transformer layer prior to the pointwise MLP, which makes it not perfectly fit the form in Prop.~1. 

For the hybrid methods (Fig.~\ref{fig:hybrid_comparison}), where the score (or random) is used to create a candidate set of size $m$ for the greedy method, we see that this helps bridge the gap between greedy and $s_{\mathsf{FO}}$ methods. However, there are diminishing returns with increasing $m$. Using random candidates decreases the accuracy, but it does not do as well as just using $s_{\mathsf{FO}}$.

Finally, we demonstrate performance-speed tradeoffs in Fig.~\ref{fig:complexity}. We plot accuracy of the selected subset vs. average time taken to compute the adversarial subset; we set $k=50$, batch size of 32, and test on a RTX 5000 GPU. Greedy performs the best, but requires several orders of magnitude more time. For the other methods, the $s_{\mathsf{FO}}$-based methods yield superior tradeoffs compared to random-candidate greedy.

\section{Conclusion}
\label{sec:conclusion}
This paper proposes a score-based approximation of marginal gain for solving discrete optimization problems in deep learning scenarios. The proposed method generalizes prior heuristic methods, and allows one to easily traverse superior performance-speed tradeoffs compared to other methods. 

\bibliographystyle{IEEEbib}
\bibliography{strings,refs}

\end{document}